\documentclass[11pt]{article}
\usepackage{float}
\usepackage[final]{acl}

\usepackage{times}
\usepackage{latexsym}

\usepackage[T1]{fontenc}

\usepackage[utf8]{inputenc}

\usepackage{microtype}

\usepackage{inconsolata}

\usepackage{graphicx}

\usepackage{amsmath}
\usepackage{amsfonts}
\usepackage{amsthm}
 \usepackage{booktabs}
 \usepackage{multirow}
 \usepackage{amsmath, amsthm} 
\usepackage{amsmath,amssymb,amsthm}    
\usepackage{booktabs}                   
\usepackage{tikz}                       
\usepackage{pgfplots}
\pgfplotsset{compat=1.18}  
\usepackage{tikz}
\usepackage{booktabs}      
\usepackage{hyperref}                   
\usepackage{url}                        

\usepackage{microtype}                  
\usepackage{caption}                    
\usepackage{enumitem}                   

\hypersetup{
  colorlinks=true,
  linkcolor=black,
  citecolor=blue,
  urlcolor=blue,
  pdfborder={0 0 0}
}
\newtheorem{proposition}{Proposition}
\newtheorem{assumption}{Assumption}
\title{SEAM: Spectral Embedding Alignment and Merging for Low-Resource TTS}

\author{Walaa Othman \\
  ITMO University \\
  SPC RAS \\
  \footnotesize{\url{walaa_othman@itmo.ru}} \\\And
  Baher Mohammad \\
  ITMO University \\
  MWS AI \\\And
  Stamatios Lefkimmiatis \\
  ITMO University \\
  MWS AI \\\And
  Ammar Ali \\
  ITMO University \\
  MWS AI \\
  \footnotesize{\url{ammarali32@itmo.ru}} \\}

\begin{document}
\maketitle

\begin{abstract}
Expanding the vocabulary of conditional text-to-speech (TTS) models typically requires extensive fine-tuning with paired audio, creating a major bottleneck for low-resource and cross-lingual adaptation. To address this, we introduce SEAM, a zero-audio initialization method that constructs embeddings for unseen tokens from known subtokens. Unlike standard pooling strategies that ignore sequential structure, SEAM applies a spectral filter to preserve temporal phonetic order and uses closed-form ridge regression to align the new embeddings with the pretrained conditioning manifold. Across three languages and two TTS architectures, SEAM approaches or matches baseline performance using up to four times less training data, reducing sample complexity by 75\% compared to random initialization and mean pooling. Our approach provides a lightweight, reproducible, and theoretically grounded solution for vocabulary expansion in low-resource speech synthesis.
\end{abstract}
\section{Introduction}
The rapid proliferation of Agentic AI has positioned text-to-speech (TTS) systems as foundational voice interfaces for autonomous, multimodal agents. As these agents scale to global deployments, the ability to rapidly adapt to new languages particularly low-resource and under-documented ones has become a critical bottleneck. Extending conditional TTS models to novel linguistic settings is fundamentally constrained by \textbf{vocabulary expansion}: every unseen phonetic or graphemic unit requires a corresponding embedding in the model's conditioning space. In practice, these tokens are initialized randomly and fine-tuned with extensive paired audio, a requirement that is infeasible when target-language data is scarce or costly to collect. This initialization bottleneck severely limits cross-lingual generalization and perpetuates a stark accessibility gap in speech technology.

We address this gap with \textbf{SEAM} (Spectral Embedding Alignment \& Merging), a principled, self-calibrating initialization method. SEAM treats subtoken embeddings as a discrete signal, applies spectral low-pass filtering to preserve low-frequency phonetic compositionality, and projects the result onto the pretrained embedding manifold via closed-form ridge regression. Grounded in the Lipschitz continuity of conditional generation, our approach is designed to promote acoustic stability and compositional faithfulness from the initialization stage.  Our contributions are:
1) A spectrally grounded merging operator that preserves temporal phonetic structure and suppresses boundary artifacts; 
2) A closed-form, self-calibration protocol that requires no gradient-based optimization or target-language audio;
3) Data-efficiency gains over standard initialization heuristics, enabling rapid vocabulary expansion for low-resource TTS.

\section{Related Work}
Recent cross-lingual and low-resource TTS research has focused on model-level adaptation strategies, including zero-shot multi-speaker transfer via external speaker encoders~\cite{yourtts2022}, cross-lingual phoneme sharing~\cite{phoneme_share_tts}, and meta-learning for rapid speaker alignment~\cite{meta_tts}. While these methods improve transferability, they typically assume access to substantial target-language data for embedding alignment or rely on gradient-based optimization that becomes unstable when data is extremely scarce. Consequently, the foundational step of vocabulary expansion remains underexplored. Current practice adapts heuristics from language modeling, most notably decomposing unseen tokens into known subtokens and averaging their embeddings~\cite{ruadapt2024}. Although computationally trivial, this permutation-invariant pooling discards the strict temporal order and articulatory dynamics essential to speech synthesis, forcing the model to relearn sequential structure from scratch during fine-tuning and inflating sample complexity. Our method provides an order-preserving, spectrally grounded initialization that aligns novel embeddings with the pretrained manifold, thereby stabilizing optimization and drastically reducing the target-language data required for subsequent adaptation.
To our knowledge, no prior work proposes an audio-guided initialization specifically for vocabulary expansion in autoregressive transformer-based TTS, making the zero-audio property of SEAM particularly important.
\section{Method}
\label{sec:method}

We address the problem of initializing conditioning embeddings for novel phonetic tokens in llm-diffusion based TTS systems. Let $\mathcal{V}_{\text{old}}$ denote a pretrained vocabulary with embedding lookup $E: \mathcal{V}_{\text{old}} \rightarrow \mathbb{R}^d$, and let $\Phi_\theta$ be the conditional generator. For a new token $t \in \mathcal{V}_{\text{new}}$ decomposable into known subtokens $\{s_1, \dots, s_m\} \subset \mathcal{V}_{\text{old}}$, we seek a conditioning vector $c_t \in \mathbb{R}^d$ that satisfies: (i) stability under the generative process, (ii) preservation of temporal phonetic order, and (iii) alignment with the pretrained embedding manifold. We achieve this via a spectral merging operator $\Psi$, calibrated in closed form.

\subsection{Continuity Principle}
\label{subsec:continuity}
Well-trained conditional TTS generators assume that the mapping from conditioning embeddings to audio is locally smooth: small, structured changes in the input should produce bounded, predictable changes in the output. To respect this property, our initialization must construct new token embeddings as smooth functions of their constituent subtokens, explicitly preserving their sequential order. This prevents  distributional shifts during early fine-tuning. A formal continuity assumption and order-sensitivity analysis are provided in Appendix \ref{app:continuity_proof} and \ref{app:order-sensitivity}.

\subsection{Spectral Merging Operator}
\label{subsec:merging}
We define a spectral merging operator $\Psi: \bigcup_{m=1}^{M} \mathbb{R}^{m \times d} \rightarrow \mathbb{R}^d$ that maps a variable-length sequence of subtoken embeddings to a single conditioning vector. To preserve sequential structure while satisfying the continuity principle, we treat the embedding matrix $\mathbf{S} = [\mathbf{e}_1; \dots; \mathbf{e}_m] \in \mathbb{R}^{m \times d}$ as a set of $d$ discrete signals and apply a two-stage pipeline.

\paragraph{Fourier Downsampling.}
For each embedding dimension $j \in \{1,\dots,d\}$, we compute the discrete Fourier transform (DFT) $\hat{\mathbf{S}}_j(k)$. In speech representations, low-frequency components capture slow, global phonetic compositionality, whereas high-frequency components often reflect sharp boundary artifacts between subtokens~\cite{rabiner1993fundamentals}. We therefore apply a Gaussian low-pass filter:
\begin{equation}
\tilde{\mathbf{S}}_j(k) = \hat{\mathbf{S}}_j(k) \cdot \exp\left(-\frac{k^2}{2\sigma^2}\right),
\label{eq:gaussian_filter}
\end{equation}
retaining only the first $K_{\mathrm{eff}}$ frequency bins, where
$K_{\mathrm{eff}}=\min(K,\lfloor m/2\rfloor+1)$, to obtain
$\widetilde{S}^{\downarrow}\in\mathbb{C}^{K_{\mathrm{eff}}\times d}$.
In practice, we use $K=4$ as the default maximum number of retained bins. This operation projects the subtoken sequence onto a smooth, bandlimited subspace that naturally suppresses discontinuities at token boundaries.

\paragraph{Linear Readout.}
We reconstruct a conditioning vector by flattening the real and imaginary components of the filtered coefficients into a feature vector $\phi(\mathbf{S}) \in \mathbb{R}^{2Kd}$, followed by a linear projection:
\begin{equation}
\Psi(\mathbf{S}) = \mathbf{W} \phi(\mathbf{S}) + \mathbf{b}, \quad \mathbf{W} \in \mathbb{R}^{d \times 2Kd}, \; \mathbf{b} \in \mathbb{R}^d.
\label{eq:linear_readout}
\end{equation}
Critically, by preserving the complex DFT coefficients, $\Psi$ remains sensitive to phase information, which directly encodes the temporal progression of the subtoken sequence. Unlike mean pooling, which is permutation-invariant, under the conditions of Proposition 1, our operator is order-sensitive: reversing the input order produces a different output embedding. ($\Psi(\mathbf{S}) \neq \Psi(\mathbf{S}^{\text{rev}})$ for non-palindromic sequences). A formal order-sensitivity analysis is provided in Appendix~\ref{app:order-sensitivity}.

\subsection{Closed-Form Self-Calibration}
\label{subsec:calibration}
To align the spectral operator $\Psi$ with the geometry of the pretrained conditioning space $\mathcal{C}$ without requiring target-language audio, we derive the projection parameters $(\mathbf{W}, \mathbf{b})$ analytically. For each token $t \in \mathcal{V}_{\text{old}}$ that admits a valid subtoken decomposition, we construct a self-supervised pair $(\mathbf{S}^{(i)}, \mathbf{c}_{\text{target}}^{(i)} = E(t))$. Decompositions are obtained using the original BPE tokenizer that generated $\mathcal{V}_{\text{old}}$, ensuring all constituent subtokens belong to the pretrained vocabulary. Stacking $N$ such pairs yields a feature matrix $\mathbf{X} \in \mathbb{R}^{N \times 2Kd}$ and target matrix $\mathbf{Y} \in \mathbb{R}^{N \times d}$. We solve the ridge regression objective:
\begin{equation}
    \min_{\mathbf{W}, \mathbf{b}} \left\| \mathbf{X} \mathbf{W}^\top + \mathbf{1}\mathbf{b}^\top - \mathbf{Y} \right\|_F^2 + \lambda \|\mathbf{W}\|_F^2,
    \label{eq:ridge_obj}
\end{equation}
which admits the closed-form solution:
\begin{equation}
    \begin{bmatrix}
        \mathbf{W}^{*\top} \\
        \mathbf{b}^{*\top}
    \end{bmatrix}
    =
    \left( \tilde{\mathbf{X}}^\top \tilde{\mathbf{X}} + \lambda \mathbf{R} \right)^{-1}
    \tilde{\mathbf{X}}^\top \mathbf{Y},
    \label{eq:ridge_sol}
\end{equation}
where $\tilde{\mathbf{X}} = [\mathbf{X} \mid \mathbf{1}]$ augments the features with a bias column, and $\mathbf{R} = \operatorname{diag}(1,\dots,1,0)$ excludes the bias term from $L_2$ regularization. The matrix $\mathbf{W}^*$ and vector $\mathbf{b}^*$ are obtained from the first $2Kd$ rows and the final row of the solution, respectively. This calibration protocol ensures: (i) \textit{manifold alignment} by minimizing projection error onto $\mathcal{C}$; (ii) \textit{order consistency} by learning from linguistically ordered subtoken sequences; and (iii) \textit{continuity preservation} by bounding the conditioning error, which directly limits acoustic deviation via Assumption~\ref{ass:lipschitz_app}.

\paragraph{Non-Latin Script Handling.}
For languages with non-Latin orthographies (e.g., Cyrillic), we first transliterate the target token into an English IPA representation using a standard mapping table. This phonetic string is then segmented into subtokens from $\mathcal{V}_{\text{old}}$, yielding the sequence $\mathbf{s}_1,\dots,\mathbf{s}_m$. At inference, the novel embedding is computed as $\mathbf{c}_{\text{new}} = \Psi^*(\mathbf{S}_{\text{new}}) = \mathbf{W}^* \phi(\mathbf{S}_{\text{new}}) + \mathbf{b}^*$ and subsequently norm-normalized to match the statistical distribution of the pretrained embeddings. Default values for the hyperparameters are $K=4$, $\sigma=2.0$ and $\lambda = 1e^{-6}$ we refer to Appendix.\ref{k_sigma_just} for more details.

\begin{center}
\fbox{\begin{minipage}{0.95\columnwidth}
\small
\setlength{\abovedisplayskip}{3pt}
\setlength{\belowdisplayskip}{3pt}
\setlength{\jot}{2pt}
\textbf{SEAM Algorithm Summary.} Given subtoken embeddings $\mathbf{s}_1,\dots,\mathbf{s}_m \in \mathbb{R}^d$, let $K_{\mathrm{eff}}=\min\left(K,\left\lfloor m/2\right\rfloor+1\right)$:
\[
\begin{aligned}
\hat{\mathbf{S}}(k) &= \sum_{n=1}^{m} \mathbf{s}_n e^{-2\pi i k (n-1)/m}, \\
\hat{\mathbf{S}}_{\text{LP}}(k) &= \hat{\mathbf{S}}(k) \cdot e^{-k^2/(2\sigma^2)}, \quad k=0,\dots,K_{\mathrm{eff}}-1, \\
\phi(\mathbf{s}_{1:m}) &= \big[\operatorname{Re}(\hat{\mathbf{S}}_{\text{LP}}(0)), \operatorname{Im}(\hat{\mathbf{S}}_{\text{LP}}(0)), \ldots, \\
&\quad \operatorname{Re}(\hat{\mathbf{S}}_{\text{LP}}(K_{\mathrm{eff}}-1)), \operatorname{Im}(\hat{\mathbf{S}}_{\text{LP}}(K_{\mathrm{eff}}-1))\big]^\top, \\
\mathbf{c}_{\text{new}} &= \mathbf{W}\,\phi(\mathbf{s}_{1:m}) + \mathbf{b}, \\
\begin{bmatrix}\mathbf{W}^\top\\\mathbf{b}^\top\end{bmatrix} &= \bigl(\tilde{\mathbf{X}}^\top\tilde{\mathbf{X}} + \lambda\mathbf{R}\bigr)^{-1}\tilde{\mathbf{X}}^\top\mathbf{Y}.
\end{aligned}
\]
\end{minipage}}
\end{center}

\paragraph{Implementation summary.}
The SEAM algorithm proceeds in two stages:

\noindent
\textbf{1. Learning the projector.}
From the old vocabulary $\mathcal{V}_{\text{old}}$, we collect self‑supervised training pairs $(S^{(i)}, c_{\text{target}}^{(i)})$ by decomposing each existing token $t$ into its constituent subtokens $\{s_1,\dots,s_m\}$ (all also in $\mathcal{V}_{\text{old}}$) using the original BPE tokenizer, and setting $c_{\text{target}}^{(i)} = E(t)$. For each such pair we compute the spectral feature vector $\phi(S^{(i)})$ and then solve the ridge regression problem (Eq.~\ref{eq:ridge_obj}) to obtain the closed‑form linear projection $W,b$.

\noindent
\textbf{2. Embedding a new token.}
For a novel token $t_{\text{new}}$ (e.g., a Kazakh Cyrillic word), we first map it to an English IPA representation using a standard transliteration table. We then decompose this IPA string into subtokens present in $\mathcal{V}_{\text{old}}$, obtaining $\mathbf{s}_1,\dots,\mathbf{s}_m$. The final embedding is computed as $\mathbf{c}_{\text{new}} = \mathbf{W}\,\phi(\mathbf{s}_1,\dots,\mathbf{s}_m) + \mathbf{b}$ and normalized to match the average norm of the original ones.

\section{Experiments}
\label{sec:experiments}

We evaluate SEAM across three diverse languages Russian (Slavic, Cyrillic), Kazakh (Turkic, Cyrillic/Latin), and French (Latin) to assess its effectiveness in low-resource and cross-lingual adaptation. All primary experiments use Chatterbox~\cite{chatterbox} as the TTS backbone. In this paper we mainly focus on three core evaluation dimensions: 
(1) \textbf{sample efficiency} under fixed training budgets (5h, 25h, and 100h), quantified via automatic speech recognition metrics (CER); and 
(2) \textbf{perceptual quality}, validated through a native-speaker Side-By-Side comparison (SBS) study.
(3)  \textbf{New Vocabulary size}
investigate how other methods perform in comparison with SEAM when the number of new added tokens increases.
Experiments with a different architecture, VibeVoice~\cite{vibevoice}, and dataset-split robustness are provided in Appendix~\ref{app:further_experiments}.
\subsection{Experimental Setup}
\label{subsec:setup}

\paragraph{Evaluation metrics.} 
We evaluate model performance using the Character Error Rate (CER), computed via pretrained automatic speech recognition (ASR) models. Specifically, we employ \textit{nvidia/stt\_kk\_ru\_fastconformer\_hybrid\_large}, \textit{nvidia/stt\_fr\_fastconformer\_hybrid\_large\_pc} for Russian, Kazakh and French; refer to Appendix~\ref{app:asr_baseline} for a detailed analysis of these ASR baselines. All reported results are averaged across three independent random seeds. Evaluation is performed on a fixed set of 100 audio samples drawn from \textit{rulibrispeech}, \textit{KazakhTTS}, and \textit{AdrienB134/french-unique-speaker-tts}.

\paragraph{Baselines.}
We compare against:
1) \textbf{Random}: $c_t \sim \mathcal{N}(0, \sigma^2 I)$ with $\sigma$ matched to pretrained embedding norm. \\
 \textbf{Mean}: $c_t = \frac{1}{m} \sum_{i=1}^m E(s_i)$, the arithmetic mean of subtoken embeddings.

All methods are fine-tuned under identical optimization schedules. Implementation details and additional metrics are provided in Appendix~\ref{app:implementation}.
\paragraph{Transliteration} We transliterate non-Latin scripts using the transliterate library (English/Russian), qazaq\_transliterator (Kazakh), and standard text normalization (French).

\subsection{Sample Efficiency Under Fixed Budgets}
\label{subsec:sample_efficiency}

Table~\ref{tab:fixed_budget} reports CER after training on 5h, 25h and 100h budgets. SEAM provides its largest gains in low-resource regimes. At 25h, SEAM improves over mean initialization by 2.22 CER points on Russian, 10.63 points on Kazakh, and 0.87 points on French. Against random initialization, the corresponding improvements are 9.16, 27.34, and 3.77. Notably, SEAM approaches or matches the performance of baselines trained on 100h, corresponding to a 4× reduction in required data. At 100h, all methods converge, with differences falling within ASR evaluation variance, confirming that SEAM's primary advantage is sample efficiency rather than asymptotic performance.
It is important to mention that the increase in CER from 5h to 25h for Random/Mean on Kazakh is attributable to word-level overlap variance across data splits in this extremely low-resource regime
\begin{table}[t]
\centering
\caption{CER (\%) after training on fixed data budgets. Lower is better. SEAM achieves comparable performance to baselines with up to 4$\times$ less data for some languages.}
\label{tab:fixed_budget}
\small
\resizebox{0.7\columnwidth}{!}{
\begin{tabular}{lcccccc}
\toprule
\textbf{Language} & \textbf{Budget} & \textbf{Init} & \textbf{CER} & \textbf{WER} \\
\midrule
\multirow{9}{*}{Russian}
     & \multirow{3}{*}{5 h} 
        & Random & 20.8 & 42.49 \\
    &   & Mean   & 17.53 & 36.73 \\
    &   & \textbf{SEAM} &\textbf{14.88} & \textbf{34.41} \\
    \cmidrule{2-5}
    & \multirow{3}{*}{25 h} 
        & Random & 17.66 & 27.18 \\
    &   & Mean   & 10.72 & 24.58 \\
    &   & \textbf{SEAM} & \textbf{8.50} & \textbf{20.01} \\
    \cmidrule{2-5}
    & \multirow{3}{*}{100 h} 
        & Random & \textbf{7.62} & 18.40 \\
    &   & Mean   & 9.64 & 19.87 \\
    &   & \textbf{SEAM} & 7.68 & \textbf{16.92} \\
\midrule
\multirow{9}{*}{Kazakh} 
    & \multirow{3}{*}{5 h} 
            & Random & 11.62 & 26.3 \\
        &   & Mean   & 8.55 & 20.9 \\
        &   & \textbf{SEAM} &\textbf{4.99} & \textbf{15.12} \\
        \cmidrule{2-5}
    & \multirow{3}{*}{25 h} 
            & Random & 32.60 & 39.89 \\
        &   & Mean   & 15.89 & 21.60 \\
        &   & \textbf{SEAM} &\textbf{5.26} & \textbf{13.19} \\
        \cmidrule{2-5}

    & \multirow{3}{*}{100 h} 
        & Random & 9.54 & 16.90 \\
    &   & Mean   & 8.50 & 16.02 \\
    &   & \textbf{SEAM} &\textbf{4.39} & \textbf{11.11} \\
   
\midrule
\multirow{9}{*}{French} 
    & \multirow{3}{*}{5 h} 
            & Random & 19.07 & 34.27 \\
        &   & Mean   & 18.28 & 34.61 \\
        &   & \textbf{SEAM} &\textbf{17.86} & \textbf{31.7} \\
        \cmidrule{2-5}
    & \multirow{3}{*}{25 h} 
        & Random & 15.70 & 29.68 \\
    &   & Mean   & 12.80 & 25.82 \\
    &   & \textbf{SEAM} & \textbf{11.93} & \textbf{23.20} \\
    \cmidrule{2-5}
    & \multirow{3}{*}{100 h} 
        & Random & 16.93 & 23.85 \\
    &   & Mean   & \textbf{9.06} & \textbf{17.20} \\
    &   & \textbf{SEAM} & 9.26 & 17.59 \\
\bottomrule
\end{tabular}
}
\end{table}

\subsection{Human Evaluation}
\label{subsec:human}

We conduct a blind Side-by-Side (SBS) preference study with 5 native speakers per language. Each annotator evaluates 25 synthesized utterances, comparing audio from two conditions:
(1) \textbf{Model A (Ours):} SEAM-initialized embeddings, fine-tuned on \textbf{25h} of speech.
(2) \textbf{Model B (Baseline):} mean initialization, fine-tuned on \textbf{100h} of speech.

For each pair, annotators select their preferred sample based on overall quality (naturalness + intelligibility), or indicate a tie.  Evaluation criteria and annotator instructions are provided in Appendix~\ref{app:eval_guidelines}.
Figure~\ref{fig:sbs_bars} visualizes the SBS preference distribution. Model A (SEAM) against mean initialization that was trained on 4x more data across three languages.
\begin{figure}[t]
\centering

\resizebox{0.7\columnwidth}{!}{
\begin{tikzpicture}
\begin{axis}[
    ybar stacked,
    width=0.95\columnwidth,
    height=6.5cm,
    bar width=25pt,
    nodes near coords={\pgfmathprintnumber{\pgfplotspointmeta}\%},
    nodes near coords style={
        font=\scriptsize,
        color=white,
        text opacity=1,
        inner sep=1pt,
        /pgf/number format/fixed,
        /pgf/number format/precision=1
    },
    ymin=0,
    ymax=100,
    ylabel={Preference (\%)},
    symbolic x coords={Russian, French, Kazakh},
    xtick=data,
    xticklabel style={font=\small},
    legend style={
        at={(0.5,-0.15)},
        anchor=north,
        legend columns=3,
        font=\scriptsize,
        /tikz/every even column/.append style={column sep=0.5em}
    },
    legend cell align=left,
    enlarge x limits=0.2,
    grid=none,
    axis line style={opacity=0},
    every axis plot post/.append style={draw=white, line width=0.4pt}
]

\addplot+[fill=red!70!black] coordinates {
    (Russian, 25.7)
    (French, 32.0)
    (Kazakh, 30.0)
};
\addlegendentry{Baseline Preferred}

\addplot+[fill=gray!60] coordinates {
    (Russian, 33.8)
    (French, 20.0)
    (Kazakh, 13.3)
};
\addlegendentry{Tie}

\addplot+[fill=green!60!black] coordinates {
    (Russian, 40.5)
    (French, 48.0)
    (Kazakh, 56.7)
};
\addlegendentry{SEAM (25h) Preferred}

\end{axis}
\end{tikzpicture}
}
\caption{Side-by-Side human preference results: SEAM-initialized model (25h training) vs. mean-pooling baseline (100h training). Green indicates preference for SEAM despite using 4$\times$ less data. Results are shown for Russian, French, and Kazakh.}
\label{fig:sbs_bars}
\end{figure} 
\noindent We refer to Appendix \ref{statisticalrel} for detailed information about the SBS study.

\subsection{Sensitivity to Vocabulary Expansion Size}
\label{subsec:ablation_tokens}

We investigate the sensitivity of our method to vocabulary expansion size by varying the number of newly added tokens ($|\mathcal{V}_{\text{new}}| \in \{200, 400, 600, 800, 1000\}$) and evaluating CER after fine-tuning on 25 hours of data. As shown in Figure~\ref{fig:ablation_tokens}, SEAM scales gracefully, exhibiting only marginal performance degradation as $|\mathcal{V}_{\text{new}}|$ increases. In contrast, both random and average initialization suffer sharp deterioration beyond 500 tokens, demonstrating that SEAM produces robust embeddings even under substantial vocabulary expansion. While we constrained $|\mathcal{V}_{\text{new}}| = 200$ in our main experiments to ensure a competitive comparison, larger vocabularies inherently yield shorter token sequences and faster inference. Notably, SEAM's relative advantage becomes increasingly pronounced at higher expansion sizes, highlighting its strong scalability and efficiency in practical deployment scenarios.
\begin{figure}[h]
\vspace{-0.59cm}
    \centering
    \includegraphics[width=0.85\columnwidth]{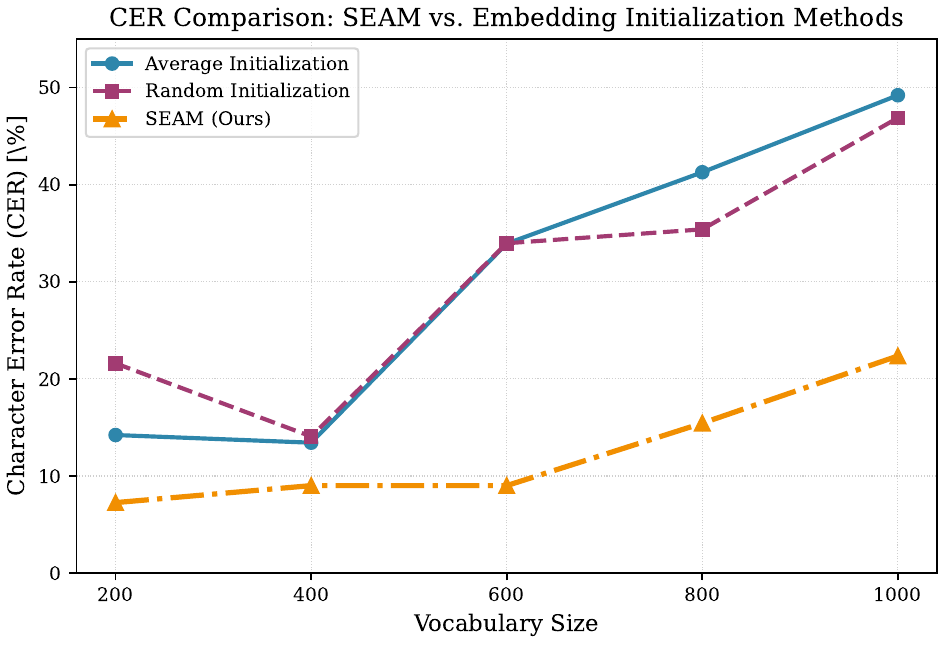}
    \caption{CER versus vocabulary expansion size. SEAM maintains robust performance as vocabulary size increases, while mean and random initialization degrade}
\label{fig:ablation_tokens}
\vspace{-0.5cm}
\end{figure}

\subsection{Extended Baseline Comparison}
\label{app:extended_baselines}

We compare SEAM with three additional order-aware initialization strategies on Russian with a 5-hour budget under identical optimization schedules.
\begin{table}[h]
\centering
\small
\caption{Extended baseline comparison on Russian (5h fine-tuning). Lower is better.}
\label{tab:extended_baselines}
\begin{tabular}{lcc}
\toprule
\textbf{Initialization Method} & \textbf{CER (\%)} & \textbf{WER (\%)} \\
\midrule
Random & 20.80 & 42.49 \\
Mean Pooling & 17.53 & 36.73 \\
Position-Aware Pooling & 23.37 & 54.70 \\
Attention Pooling & 20.14 & 50.03 \\
Learned Attention & 15.30 & 42.86 \\
\textbf{SEAM (Ours)} & \textbf{14.88} & \textbf{34.41} \\
\bottomrule
\end{tabular}
\end{table}
\textbf{Position-Aware Pooling} extracts permutation-variant statistics (mean, max, first, last, and positionally weighted sum) and projects them via closed-form ridge regression with $\lambda = 10^{-6}$.\\
\textbf{Attention Pooling} applies fixed random query vectors for scaled dot-product attention over subtoken embeddings, followed by ridge projection, with no learned parameters.
\textbf{Learned Attention} trains a single-head attention layer via AdamW (lr $= 5 \times 10^{-4}$) on the same self-supervised pairs used by SEAM, and is the strongest order-aware baseline.
SEAM outperforms all baselines because its spectral low-pass filter suppresses high-frequency tokenizer-boundary artifacts, explaining the poor results of Position-Aware Pooling (CER 23.37\%) and Attention Pooling (CER 20.14\%), which treat all frequency components equally. Learned Attention is competitive (CER 15.30\%) but converges poorly under the limited calibration data, yielding approximately 10\% lower cosine similarity to ground-truth embeddings and higher MSE than SEAM's closed-form solution. This confirms that gradient-based attention requires substantially more data than SEAM's analytic ridge regression in the extreme low-resource initialization setting.

\subsection{Ablation: Order Sensitivity}
\label{app:order_ablation}

To test whether temporal order preservation drives SEAM's gains, we disrupt subtoken order before spectral filtering on Russian with a 5-hour fine-tuning budget.

\begin{table}[h]
\centering
\small
\caption{Order sensitivity ablation on Russian (5h).}
\label{tab:order_ablation}
\begin{tabular}{lcc}
\toprule
\textbf{Method} & \textbf{CER (\%)} & \textbf{WER (\%)} \\
\midrule
SEAM (Original) & \textbf{14.88} & \textbf{34.41} \\
SEAM (Reversed) & 19.67 & 51.99 \\
SEAM (Shuffled) & 22.91 & 53.66 \\
\bottomrule
\end{tabular}
\end{table}

\noindent\textbf{Reversed} strictly inverts the subtoken sequence ($s_1,\dots,s_m \rightarrow s_m,\dots,s_1$), while \textbf{Shuffled} randomly permutes it (seed=42), completely destroying temporal order. In both cases, spectral filtering and ridge regression are applied identically to the original SEAM pipeline.

Reversal increases CER by 4.79 absolute points (14.88\% $\rightarrow$ 19.67\%), while shuffling increases it by 8.03 points (14.88\% $\rightarrow$ 22.91\%). Shuffling also nearly doubles WER (34.41\% $\rightarrow$ 53.66\%). Since shuffling is more damaging than reversal, SEAM relies not only on sequence directionality but on the precise temporal phonetic trajectory encoded in the DFT phase. This confirms that order-preserving spectral merging, rather than ridge alignment alone, is the primary source of SEAM's advantage.
\section{Conclusion}
We presented SEAM, a zero-audio initialization method for low-resource TTS vocabulary expansion. By spectrally merging subtoken embeddings and aligning them via closed-form regression, SEAM preserves phonetic order and reduces sample complexity by up to 4× across languages and architectures. While dependent on subtoken decomposability and IPA transliteration, SEAM offers a lightweight, reproducible pathway to stabilize early-stage TTS adaptation.
\paragraph{\textbf{Acknowledgments}}
This research was supported as part of the ITMO University Research Projects in AI Initiative.
\section{Limitations}
Despite its data efficiency, SEAM operates within three primary constraints:
\begin{itemize}
    \item \textbf{Subtoken \& IPA Dependency:} Initialization requires reliable decomposition into known subtokens. For non-Latin scripts or languages lacking robust IPA transliteration tables, mapping errors may propagate to the conditioning space.
    \item \textbf{Fine-Tuning Requirement:} SEAM addresses the initialization bottleneck but does not replace downstream adaptation; acoustic alignment still requires paired target-language audio but much less than other initialization methods.
    \item \textbf{Evaluation Variance in Extreme Low-Resource Regimes}: At very small budgets, word-level overlap between training and test splits can inflate metrics non-monotonically, as observed for Kazakh. 
\end{itemize}

\section{Ethical Considerations}
SEAM aims to reduce barriers for low-resource language TTS, promoting equitable access to speech technology. However, like all voice synthesis methods, it carries dual-use risks: malicious actors could exploit improved initialization for unauthorized voice cloning or disinformation. We mitigate this by releasing only embedding initialization code not full model weights and requiring explicit consent for any deployment involving human voices. All experiments use publicly available, ethically sourced corpora. We encourage adopters to follow local regulations and community guidelines, particularly when working with  under-documented languages.


\bibliography{custom}
\appendix

\newpage
\section{Theoretical Overview}

\subsection{Continuity Principle and Acoustic Stability}
\label{app:continuity_proof}

Let $\mathcal{C} \subset \mathbb{R}^d$ denote the pretrained conditioning space and $\mathcal{X} \subset \mathbb{R}^D$ the acoustic output space. The TTS generator defines a stochastic mapping $\mathbf{x} = \Phi_\theta(\mathbf{c}, \mathbf{z})$, where $\mathbf{c} \in \mathcal{C}$ is a conditioning vector and $\mathbf{z} \sim \mathcal{N}(\mathbf{0}, \mathbf{I})$ represents diffusion or autoregressive noise. Under standard regularity conditions for modern generative audio models, the conditional expectation operator is Lipschitz continuous~\cite{chen2023likelihood}:

\begin{assumption}[Lipschitz Continuity]
\label{ass:lipschitz_app}
There exists a constant $L > 0$ such that for all $\mathbf{c}_a, \mathbf{c}_b \in \mathcal{C}$:
\begin{equation}
\big\| \mathbb{E}_{\mathbf{z}}[\Phi_\theta(\mathbf{c}_a, \mathbf{z})] - \mathbb{E}_{\mathbf{z}}[\Phi_\theta(\mathbf{c}_b, \mathbf{z})] \big\|_2 \leq L \|\mathbf{c}_a - \mathbf{c}_b\|_2.
\end{equation}
\end{assumption}

\noindent\textbf{Intuition for TTS.} Assumption~\ref{ass:lipschitz_app} formalizes a fundamental property of stable generative models: the acoustic output space is metrically aligned with the conditioning manifold. Consequently, large or erratic perturbations to a token embedding can cause disproportionate acoustic artifacts, distributional shift, or gradient instability during fine-tuning. To mitigate this, the embedding $\mathbf{c}_t$ for a novel token must be constructed as a \textit{smooth}, \textit{structure-preserving} function of its subtoken embeddings $\{\mathbf{e}_i = E(s_i)\}_{i=1}^m$. By enforcing temporal order and spectral smoothness (Section~\ref{subsec:merging}), SEAM ensures that $\|\mathbf{c}_t - \mathbf{c}_{\text{ref}}\|_2$ remains small for phonetically similar tokens, thereby bounding the expected acoustic deviation during early adaptation stages and stabilizing optimization from step zero.

\subsection{Order-Sensitivity Analysis}
\label{app:order-sensitivity}
\begin{proposition}[Order sensitivity; sufficient conditions]
\label{prop:order_sensitivity}
Let
\[
S = [s_1; \ldots; s_m] \in \mathbb{R}^{m \times d},
\qquad m \ge 2,
\]
and let
\[
S^{\mathrm{rev}} = [s_m; \ldots; s_1]
\]
denote the linearly reversed sequence. Let
\begin{equation}
K_{\mathrm{eff}}
=
\min\left(
K,
\left\lfloor \frac{m}{2} \right\rfloor + 1
\right)
\end{equation}
denote the number of unique retained DFT bins. Suppose that:

\begin{enumerate}
    \item $K_{\mathrm{eff}} \ge 2$;
    \item at least one retained non-DC bin is reversal-asymmetric, i.e., there exist
    $k \in \{1,\ldots,K_{\mathrm{eff}}-1\}$ and embedding dimension $j$ such that
    \begin{equation}
    e^{-2\pi i k (m-1)/m}\,\hat{S}_j[k]^*
    \neq
    \hat{S}_j[k];
    \end{equation}
    \item the projection matrix $W$ does not annihilate the corresponding feature difference.
\end{enumerate}

Then
\begin{equation}
\Psi(S^{\mathrm{rev}})
\neq
\Psi(S).
\end{equation}
\end{proposition}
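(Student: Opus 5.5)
The plan is to compute the DFT of the reversed sequence explicitly in terms of the DFT of $S$, show that the two feature vectors $\phi(S^{\mathrm{rev}})$ and $\phi(S)$ differ, and then push this difference through the affine readout (\ref{eq:linear_readout}). Since $\Psi(S^{\mathrm{rev}})-\Psi(S) = W\bigl(\phi(S^{\mathrm{rev}})-\phi(S)\bigr)$, the bias $b$ cancels. The claim therefore reduces to two facts: the feature difference $\Delta\phi$ is nonzero, and $W\Delta\phi\neq 0$. The second fact is exactly condition 3, once we interpret ``does not annihilate'' as $\Delta\phi\notin\ker W$.

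\textbf{Step 1 (reversal identity).} With the indexing of the SEAM summary, the reversed sequence has entries $s^{\mathrm{rev}}_n = s_{m+1-n}$. Substituting $n' = m+1-n$ gives
\[
\hat{S}^{\mathrm{rev}}_j[k] = \sum_{n'=1}^{m} s_{n',j}\, e^{-2\pi i k (m-n')/m}.
\]
Factoring out $e^{-2\pi i k (m-1)/m}$ leaves $\sum_{n'} s_{n',j} e^{+2\pi i k (n'-1)/m}$. Because the $s_{n',j}$ are real, this remaining sum is $\hat S_j[k]^*$. Hence
\[
\hat S^{\mathrm{rev}}_j[k] = e^{-2\pi i k (m-1)/m}\,\hat S_j[k]^*,
\]
which is precisely the left-hand side of condition 2. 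I will also note that $K_{\mathrm{eff}}$ depends only on $m$, so both sequences retain the same bins and feature length. The Gaussian weight $e^{-k^2/(2\sigma^2)}$ is likewise identical for both.

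\textbf{Step 2 (feature difference is nonzero).} By condition 1, a non-DC bin $k\ge 1$ is retained. By condition 2, there is a pair $(k,j)$ with $\hat S^{\mathrm{rev}}_j[k]\neq \hat S_j[k]$. Multiplying by the strictly positive factor $e^{-k^2/(2\sigma^2)}$ preserves this inequality. A complex number is determined by its real and imaginary parts, so at least one of the coordinates of $\phi$ holding $\operatorname{Re}$ or $\operatorname{Im}$ of $\tilde S_j[k]$ differs between the two sequences. Therefore $\Delta\phi\neq 0$. It is also worth remarking that the DC bin is always reversal-invariant, since $\hat S^{\mathrm{rev}}[0]=\hat S[0]$. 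This is why condition 1 is needed, and it is the mean-pooling analogue.

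\textbf{Step 3 (readout).} Condition 3 gives $W\Delta\phi\neq 0$, hence $\Psi(S^{\mathrm{rev}})\neq\Psi(S)$. I expect no step to be a genuine obstacle; the argument is essentially bookkeeping. The one place requiring care is Step 1. There the phase factor must match the paper's $(n-1)$ indexing convention, and the conjugation step relies on the embeddings being real-valued. I will also state explicitly that the $2Kd$-dimensional feature is zero-padded when $K_{\mathrm{eff}}<K$. The padding is identical for $S$ and $S^{\mathrm{rev}}$, so it does not affect $\Delta\phi$.
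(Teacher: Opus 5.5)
Your proposal is correct and follows essentially the same route as the paper: derive $\hat S^{\mathrm{rev}}_j[k]=e^{-2\pi i k(m-1)/m}\hat S_j[k]^*$, use condition 2 to get $\phi(S^{\mathrm{rev}})\neq\phi(S)$, and use condition 3 to carry that difference through the affine readout. Your 1-indexed computation is equivalent to the paper's 0-indexed one. Your remarks on the strictly positive Gaussian factor and the identical zero-padding fill in small bookkeeping steps that the paper leaves implicit.
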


\begin{proof}
For each embedding dimension $j$, define the DFT coefficient
\begin{equation}
\hat{S}_j[k]
=
\sum_{n=0}^{m-1}
S_j[n]\,
e^{-2\pi i k n/m}.
\end{equation}

For the linearly reversed sequence
$
S^{\mathrm{rev}}[n] = S[m-1-n],
$
the DFT satisfies
\begin{align}
\hat{S}^{\mathrm{rev}}_j[k]
&=
\sum_{n=0}^{m-1}
S_j[m-1-n]\,
e^{-2\pi i k n/m}
\nonumber \\
&=
e^{-2\pi i k (m-1)/m}
\sum_{r=0}^{m-1}
S_j[r]\,
e^{2\pi i k r/m}
\nonumber \\
&=
e^{-2\pi i k (m-1)/m}
\hat{S}_j[-k]
\nonumber \\
&=
e^{-2\pi i k (m-1)/m}
\hat{S}_j[k]^*,
\end{align}
where the last equality follows because each embedding dimension is real-valued.

Thus, reversal changes the complex phase structure of the retained spectrum. For real-valued embedding dimensions, this generally affects the imaginary components retained by $\phi(S)$. If at least one retained non-DC frequency component is reversal-asymmetric, then
\begin{equation}
\phi(S^{\mathrm{rev}})
\neq
\phi(S).
\end{equation}

If, in addition, $W$ does not annihilate this difference, i.e.,
\begin{equation}
W\left(
\phi(S^{\mathrm{rev}}) - \phi(S)
\right)
\neq
0,
\end{equation}
then
\begin{equation}
\Psi(S^{\mathrm{rev}})
=
W\phi(S^{\mathrm{rev}}) + b
\neq
W\phi(S) + b
=
\Psi(S).
\end{equation}
\end{proof}
This is a sufficient-condition statement rather than an unconditional guarantee. If all
retained spectral coefficients are reversal-invariant, for example when only the DC
component or other perfectly symmetric components are retained, the operator may map
$S$ and $S^{\mathrm{rev}}$ to the same conditioning vector. The proposition therefore
requires asymmetry in at least one retained non-DC frequency bin.

\paragraph{Choice of $K$ and $\sigma$.}
\label{k_sigma_just}
We set $K=4$ as the default maximum number of retained frequency bins and $\sigma=2.0$.
For short subtoken sequences, $K$ should be interpreted as an upper bound: the number of
unique retained DFT coefficients is effectively
\[
K_{\mathrm{eff}}
=
\min\left(K,\left\lfloor \frac{m}{2}\right\rfloor+1\right).
\]
The low-pass argument is strongest when the sequence length permits
$K_{\mathrm{eff}}\ll m/2$, so that the retained coefficients remain well below the
Nyquist region where boundary artifacts dominate. For very short decompositions, the
sufficient conditions of Proposition~\ref{prop:order_sensitivity} may not hold, and the
order-sensitivity claim is instead supported empirically by the order-perturbation
ablation in Table~\ref{tab:order_ablation}.

Empirical spectral analysis (Figure~\ref{fig:spectral_analysis}) shows that cumulative
energy exceeds 99\% by $k=3$, indicating that the subtoken embedding trajectory is
effectively bandlimited. Retaining bins $k\in\{0,1,2,3\}$ therefore preserves the
dominant low-frequency phonetic structure while strongly attenuating high-frequency
boundary artifacts. The Gaussian gain with $\sigma=2.0$ satisfies
\[
G(k)=\exp\left(-\frac{k^2}{2\sigma^2}\right),
\]
giving
\[
G(0)=1.0,\; G(1)\approx 0.88,\; \]
\[G(2)\approx 0.61,\; G(3)\approx 0.32,\; G(4)\approx 0.14.
\]
Thus, high-frequency components are suppressed by more than $85\%$ by $k=4$.

\begin{figure}[h]
    \centering
\includegraphics[width=1.0\columnwidth]{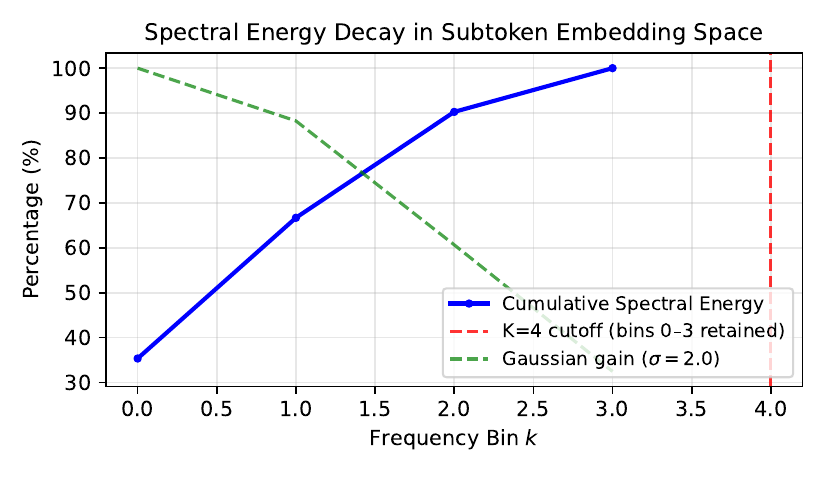}
    \caption{Spectral energy decay in subtoken embedding sequences. Cumulative energy (blue) exceeds 99\% by $k=3$, confirming bandlimited compositionality. The Gaussian filter $\sigma=2.0$ (green) suppresses high-frequency noise while retaining bins $k \in \{0,1,2,3\}$ (red dashed line). This empirically validates $(K=4, \sigma=2.0)$ without hyperparameter tuning.}
    \label{fig:spectral_analysis}
\end{figure}

\section{Further Experiments}
\label{app:further_experiments}
\subsection{Implementation details.}
\label{app:implementation}

All experiments were conducted on a single workstation equipped with an NVIDIA RTX 3090 GPU (24 GB VRAM) and an Intel Core i9-14900K CPU (24 cores, 32 logical processors, 3.2 GHz base clock). Training pipelines were implemented using the Hugging Face \texttt{transformers} and \texttt{accelerate} ecosystems with mixed-precision (\texttt{bf16}) enabled.

\paragraph{Chatterbox (0.5B) Fine-Tuning.} 
We perform full-parameter fine-tuning of the English-only Chatterbox base model. Optimization uses AdamW with a learning rate of $5 \times 10^{-4}$, 10\% warmup steps, and a cosine decay schedule. The per-device batch size is 8 with 8 gradient accumulation steps (effective batch size 64) over 1 epoch. Data loading utilizes 8 workers with pinned memory disabled to optimize CPU--GPU transfer. All automatic metrics are computed post-hoc on the held-out test set.

\paragraph{VibeVoice (1.5B) Fine-Tuning.} 
For cross-architecture validation, we adapt VibeVoice using Low-Rank Adaptation (LoRA). We apply LoRA to all linear projection layers (\texttt{q\_proj}, \texttt{k\_proj}, \texttt{v\_proj}, \texttt{o\_proj}, \texttt{gate\_proj}, \texttt{up\_proj}, \texttt{down\_proj}) with rank $r=128$ and scaling factor $\alpha=256$. The diffusion head is frozen , and loss weights are set to 1.4 for the diffusion objective and 0.04 for cross-entropy. Training uses a learning rate of $5 \times 10^{-5}$, a cosine scheduler with 3\% warmup, maximum gradient norm of 0.8, and voice prompt dropout of 0.2. The effective batch size is 16 (4 per device $\times$ 4 accumulation steps) over 3 epochs, with a DDPM set to 10 steps to stabilize diffusion sampling.

All runs use fixed random seeds for reproducibility, and hyperparameters are kept constant across languages to ensure fair comparison.

\paragraph{Ridge Regularization.}
The ridge regression regularization coefficient in Eq.~3 is set to $\lambda = 10^{-6}$ for all experiments across all languages and architectures. This value provides minimal regularization sufficient to ensure numerical stability of the closed-form solution (Eq.~4) without biasing the projection away from the least-squares optimum. No language-specific or architecture-specific tuning of $\lambda$ was performed.

\paragraph{IPA Transliteration Setup.}
\label{app:ipa_setup}
For languages with non-Latin orthographies, SEAM requires mapping target-language tokens to an English IPA representation before subtoken decomposition. We use the following tools:
\begin{itemize}
    \item \textbf{Russian}: The \texttt{transliterate} Python library maps Cyrillic graphemes to their English IPA equivalents. Standard Russian phonological rules (vowel reduction, consonant palatalization) are applied during normalization.
    \item \textbf{Kazakh}: The \texttt{qazaq\_transliterator} library handles Cyrillic-to-Latin conversion, followed by IPA mapping. Kazakh-specific vowel harmony and consonant assimilation rules are respected during transliteration.
    \item \textbf{French}: Standard orthographic-to-phonemic normalization is applied, handling liaison, elision, and nasal vowel conventions.
\end{itemize}
The resulting IPA strings are segmented using the original BPE tokenizer that generated $\mathcal{V}_{\text{old}}$, ensuring all constituent subtokens belong to the pretrained vocabulary.

\subsection{ASR Evaluation Baseline}
\label{app:asr_baseline}

To establish the error floor of our automatic evaluation pipeline, we evaluated the pretrained ASR models on the original, human-recorded audio from each test set (i.e., ground-truth speech, not synthesized). Table~\ref{tab:asr_baseline} reports the resulting Character and Word Error Rates. These values represent the irreducible lower bound for CER/WER in our TTS evaluations: synthesized output is unlikely to achieve a lower error rate than the ASR model’s performance on clean, native speech.

\begin{table}[h]
\centering
\small
\caption{ASR baseline performance on original human-recorded audio. Lower is better. These values define the error floor for TTS evaluation.}
\label{tab:asr_baseline}
\begin{tabular}{lcc}
\toprule
\textbf{Language} & \textbf{CER (\%)} & \textbf{WER (\%)} \\
\midrule
Russian & 4.88 & 14.68 \\
French & 0.63 & 3.30 \\
Kazakh & 0.98 & 5.68 \\
\bottomrule
\end{tabular}
\end{table}

These results indicate that the ASR models exhibit high accuracy on French and Kazakh (CER $<1\%$), while Russian presents a more challenging evaluation setting (CER $\approx 5\%$). Consequently, when interpreting the TTS results in Table~1, the effective TTS-induced error for Russian should be understood relative to this 4.88\% baseline. For instance, SEAM's 7.68\% CER at 100h corresponds to an approximate TTS-specific error of $7.68 - 4.88 = 2.80\%$, confirming that the synthesized speech remains close to human-level intelligibility. This baseline analysis ensures that reported improvements reflect genuine gains in synthesis quality rather than artifacts of ASR model variance.

\subsection{Evaluation metrics.}
In this research, we reported:
\begin{itemize}
    \item \textbf{Objective}: Character Error Rate (CER) using  pretrained ASR models.
    \item \textbf{Efficiency}:   training data count with the corresponding  CER at each count.
    \item \textbf{Human}: Side By Side (SBS)  (see Section~\ref{subsec:human}).
\end{itemize}

\subsection{LLM Usage}

We acknowledge the use of a large language model (LLM) for editing aid during manuscript preparation. The LLM was employed exclusively for language polishing, including grammar correction, syntactic refinement, and improving the clarity and formatting of figures and visualizations. 
\subsection{Ablation: Sensitivity to Dataset Splits}

To evaluate SEAM's sensitivity to training data composition, we conduct an ablation study on the Kazakh corpus using three independent random splits. Each split preserves the same dataset size, speaker diversity, and phonemic coverage while varying the exact utterance selection. All models are fine-tuned under identical optimization schedules and evaluated on a fixed held-out test set. Performance remains highly consistent across partitions: the WER ranges from 14.97\% to 16.51\% (mean: 15.61\%, $\sigma = 0.66$) and the CER ranges from 4.79\% to 5.67\% (mean: 5.27\%, $\sigma = 0.36$). The low standard deviations ($\sigma^2_{\text{WER}} = 0.43$, $\sigma^2_{\text{CER}} = 0.13$) confirm that SEAM's performance is robust to partition-level variations in the training data.
The low standard deviation across splits indicates that SEAM's initialization is robust to dataset composition and does not rely on  high-quality subsets.

\paragraph{Non-Monotonic Kazakh Trend (5h $\rightarrow$ 25h).}
\label{par:kazakh_nonmonotonic}
Table~1 reveals a counter-intuitive increase in CER from the 5h to the 25h budget for Random and Mean initialization on Kazakh (e.g., Random: 11.62\% $\rightarrow$ 32.60\%). We clarify that this non-monotonic behavior is an artifact of word-level (token-level) lexical overlap between the training and test splits, rather than a failure of model capacity. Although we enforce strict utterance-level disjointness between training and test sets, in extremely low-resource regimes the specific utterances randomly sampled for the 5h subset may contain a higher density of words that also appear in the 100-utterance test set, artificially inflating performance metrics. Conversely, the 25h subset introduces greater lexical diversity and reduced overlap with the test set, making evaluation inherently harder. Critically, this pattern is consistent across \textit{all} initialization methods, SEAM also exhibits a modest increase (4.99\% $\rightarrow$ 5.26\%) confirming that the trend is driven by data composition rather than any method-specific deficiency. At 100h, all methods stabilize, and SEAM achieves its lowest Kazakh CER of 4.39\%, well above the ASR error floor of 0.98\% (Table~\ref{tab:asr_baseline}).

\subsection{Ablation: Cross-Architecture Generalization}
\label{subsec:ablation_vibevoice}

To assess the architectural agnosticism of our approach, we apply SEAM to VibeVoice, a decoder-based speech model. We fine-tune VibeVoice on a 5-hour Russian corpus using three embedding initialization strategies. As shown in Table~\ref{tab:russian_5h_cer}, SEAM achieves a CER of 6.14\%, significantly outperforming both average (8.34\%) and random (9.26\%) initialization. This result confirms that SEAM's embedding quality generalizes beyond the primary architecture, maintaining a consistent performance advantage under low-resource fine-tuning conditions.
\begin{table}[h]
\centering
\caption{CER (\%) for VibeVoice fine-tuned on Russian (5 hours) under different embedding initialization strategies. Lower is better.}
\label{tab:russian_5h_cer}
\resizebox{0.7\columnwidth}{!}{%
\begin{tabular}{lccc}
\toprule
\textbf{Language} & \textbf{Initialization} & \textbf{Hours} & \textbf{CER (\%)} \\
\midrule
Russian & Random & 5 & 9.26 \\
Russian & Average & 5 & 8.34 \\
Russian & \textbf{SEAM (Ours)} & 5 & \textbf{6.14} \\
\bottomrule
\end{tabular}%
}
\end{table}



\subsection{Ablation: Audio-Guided Initialization}
\label{app:audio_guided}

To assess whether incorporating target-language audio during initialization improves upon SEAM's zero-audio design, we conduct an ablation comparing zero-audio SEAM against an audio-guided variant. We note that, to the best of our knowledge, no prior published work proposes an independent audio-guided initialization method specifically for vocabulary expansion in autoregressive transformer-based TTS. We therefore position this experiment as an ablation validating the zero-audio property, rather than a comparison against an existing baseline.

The \textbf{Audio-Guided} variant proceeds in two stages: (1)~SEAM initialization is applied identically to the zero-audio setting; (2)~the newly initialized embeddings and the LM head are further refined via AdamW optimization on 1 hour of calibration audio from the target language before the main fine-tuning phase begins.

\paragraph{Results.}
Table~\ref{tab:audio_guided} reports the training loss dynamics for both variants on Russian (5h budget).

\begin{table}[h]
\centering
\small
\caption{Training loss dynamics: Zero-audio SEAM vs.\ Audio-Guided variant (Russian, 5h). The audio-guided variant exhibits severe optimization instability.}
\label{tab:audio_guided}
\begin{tabular}{lcc}
\toprule
\textbf{Variant} & \textbf{Initial Loss} & \textbf{Final Loss} \\
\midrule
SEAM (Zero-Audio) & 10.2 & $\sim$4.0 \\
Audio-Guided & 7.2 & $\sim$10.0 \\
\bottomrule
\end{tabular}
\end{table}

While audio-guided refinement achieves a lower initial training loss (7.2 vs.\ 10.2), it introduces severe optimization instability. The refinement step causes embedding norms to inflate to atypical values, distorting the geometric alignment established by SEAM's closed-form calibration. When the main fine-tuning phase begins from this checkpoint, the loss spikes from 7.2 to 15.0 before slowly recovering to approximately 10.0 still substantially worse than zero-audio SEAM's converged loss of $\sim$4.0. In contrast, zero-audio SEAM converges smoothly and monotonically from 10.2 to $\sim$4.0.

This result demonstrates that SEAM's zero-audio initialization is not merely a convenience but a deliberate design choice. The closed-form ridge regression produces embeddings that are geometrically well-calibrated to the pretrained manifold; subsequent audio-guided refinement disrupts this calibration, causing the model to fall into a poor local minimum that requires significantly more data to escape. The absence of any independent audio-based initialization baseline in the literature further underscores that SEAM's zero-audio property fills a genuine methodological gap.

\section{Side-by-Side (SBS) Human Evaluation Guidelines}
\label{app:eval_guidelines}

Annotators evaluated synthesized audio samples using a Side-by-Side (SBS) preference framework, focusing on two orthogonal dimensions: \textbf{Realism} (acoustic naturalness) and \textbf{Linguistic Correctness} (fidelity to orthographic and phonological norms of the target language). Below are the standardized criteria and decision protocols provided to annotators.

\subsection{Dimension 1: Realism (Naturalness)}
Annotators rated each sample on a 5-point scale. The following anchors guided the assessment:
\begin{itemize}
    \item \textbf{5 (Excellent):} Indistinguishable from professional human speech. Prosody is fluid, timbre is natural, and paralinguistic cues (e.g., breath, micro-pauses) are appropriately integrated.
    \item \textbf{4 (Good):} Largely natural but exhibits minor artifacts, such as slight pitch irregularities, mild robotic timbre, or subtle prosodic mismatches (e.g., flattened intonation on interrogatives). Comprehension remains effortless.
    \item \textbf{3 (Fair):} Noticeable synthesis artifacts or pacing irregularities (e.g., unnatural pauses, fragmented rhythm), but speech remains fully intelligible. Pronunciation may occasionally mismatch syntactic context.
    \item \textbf{2 (Poor):} Severe articulation errors, phoneme hallucinations, or repetitive stuttering. Speech is strained or partially unintelligible, significantly impeding comprehension.
    \item \textbf{1 (Bad):} Highly robotic, garbled, or completely unintelligible. Contains nonsensical syllable sequences or arbitrary breaks that destroy syntactic coherence.
\end{itemize}

\subsection{Dimension 2: Linguistic Correctness}
Linguistic correctness measures adherence to the phonological, morphological, and orthographic conventions of the target language (French, Kazakh, or Russian). Annotators identified and penalized the following error categories:

\begin{itemize}
    \item \textbf{Phoneme \& Stress Accuracy:}
    \begin{itemize}
        \item \textit{Russian/Kazakh:} Correct realization of vowel reduction (\textit{akanye}), palatalization (soft vs. hard consonants), and lexical stress placement. Misplaced stress that changes word meaning is a critical error.
        \item \textit{French:} Correct liaison, elision, and nasal vowel production. Failure to apply obligatory liaison (e.g., \textit{les\_amis}) or incorrect nasalization (e.g., \textit{bon} vs. \textit{bonne}) is penalized.
        \item \textit{Kazakh:} Accurate pronunciation of vowel harmony and consonant assimilation rules. Errors in front/back vowel alignment within a word constitute a linguistic error.
    \end{itemize}
    
    \item \textbf{Lexical \& Orthographic Fidelity:}
    \begin{itemize}
        \item Any substitution, omission, or addition of words compared to the reference text is penalized.
        \item Numbers, abbreviations, and proper nouns must be pronounced according to target-language conventions (e.g., Russian: ``1995'' $\rightarrow$ \textit{tysyacha devyatsot devyanosto pyat'}; French: ``M. Dupont'' $\rightarrow$ \textit{monsieur Dupont}; Kazakh: ``2024'' $\rightarrow$ \textit{eki myń jıyrma t\"{o}rt}).
        \item For Cyrillic scripts (Russian/Kazakh), the model must distinguish visually similar letters with distinct phonemes.
    \end{itemize}
    
    \item \textbf{Prosody \& Intonation:}
    \begin{itemize}
        \item Intonation contours must match sentence type (e.g., rising pitch for questions in French/Russian; declarative fall for statements).
        \item Sentence-level stress and phrasing should reflect syntactic boundaries. Arbitrary pauses within phrases or unnatural emphasis on function words are penalized.
    \end{itemize}
\end{itemize}

\subsection{Scoring and Preference Decision}
For each side-by-side trial, annotators assessed both Model A and Model B against the above criteria. The final preference was determined as follows:
\begin{itemize}
    \item If one model demonstrated significantly higher realism \textit{and/or} fewer linguistic errors, it was selected as preferred.
    \item If both models exhibited comparable levels of quality and error rates, the annotator selected ``Tie.''
    \item Each linguistic error (phoneme, stress, lexical, or prosodic) resulted in a proportional downgrade in the overall quality assessment for that sample.
    \item Annotators were instructed to prioritize \textit{intelligibility} over minor prosodic artifacts when making close calls.
\end{itemize}

\paragraph{Annotator Instructions.}
\begin{enumerate}
    \item Listen to both samples (A and B) for the same input text \textit{before} making a decision.
    \item Evaluate each dimension independently, then aggregate for the final preference.
    \item If audio quality issues (e.g., clipping, background noise) affect both samples equally, ignore them and focus on synthesis artifacts.
    \item When in doubt, select ``Tie'' rather than guessing.
\end{enumerate}

\paragraph{Statistical Reliability.}
\label{statisticalrel}
To quantify the reliability of the SBS preferences, we analyze the aggregate preference counts ($N=125$ per language; 5 annotators $\times$ 25 samples), excluding ties. We compute the SEAM preference proportion among non-tie judgments and report 95\% Wilson confidence intervals (Table~\ref{tab:sbs_significance}). An exact binomial test against the null hypothesis of equal preference ($p=0.5$) indicates that SEAM's preference is statistically significant across all three languages: Russian ($p=0.041$), Kazakh ($p=0.002$), and French ($p<0.001$). This confirms the robustness of the perceptual quality gains.
\begin{table}[h]
\centering
\small
\begin{tabular}{lccc}
\toprule
Lang.  & SEAM \% (Non-Tie) & 95\% CI & $p$-val \\
\midrule
Russian  & 61.4\% & [50.4, 71.4] & 0.041 \\
Kazakh   & 65.1\% & [55.6, 73.6] & 0.002 \\
French   & 70.6\% & [60.2, 79.2] & <.001 \\
\bottomrule
\end{tabular}
\caption{SBS preference statistics ($N=125$ per language; 5 annotators $\times$ 25 samples). ``SEAM \% (Non-Tie)'' represents the preference rate excluding ties. 95\% Wilson confidence intervals and exact binomial $p$-values (against $p=0.5$) demonstrate statistical significance across all evaluated languages.}
\label{tab:sbs_significance}
\end{table}

\section{Extended Related Work}
\label{sec:extended_related}

\paragraph{Cross-Lingual and Zero-Shot TTS Adaptation.}
Recent advances in zero-shot and cross-lingual TTS have dramatically expanded speaker and language coverage. Models such as VALL-E, VoiceCraft, and CosyVoice leverage discrete acoustic tokenizers and autoregressive or diffusion-based decoding to synthesize speech from short prompts \cite{valle2023,voicecraft2024,cosyvoice2024}. While highly effective for well-resourced languages, their adaptation to under-documented languages remains constrained by the need for paired text-speech data or language-specific phoneme inventories . Cross-lingual transfer methods typically rely on shared phoneme spaces, speaker-invariant conditioning, or meta-learning for rapid alignment \cite{yourtts2022}. However, these approaches still require substantial target-language fine-tuning to stabilize novel token embeddings, leaving the foundational step of vocabulary expansion underexplored.

\paragraph{Vocabulary Expansion and Subtoken Initialization.}
The standard pipeline for handling out-of-vocabulary (OOV) tokens decomposes unseen units into known subtokens via byte-pair encoding (BPE) or phoneme-grapheme mappings, then initializes embeddings using random draws or arithmetic mean pooling \cite{ruadapt2024}. While computationally trivial, mean pooling is permutation-invariant and discards the sequential articulatory structure essential to speech synthesis.

\textbf{Positioning.} SEAM bridges these research threads by introducing a unified, zero-audio pipeline that combines spectral low-pass filtering with closed-form ridge regression. Unlike iterative alignment methods, SEAM computes the optimal projection analytically, ensuring geometric stability from step zero. By explicitly retaining DFT phase information, it overcomes the permutation invariance of mean pooling while avoiding the data dependencies of contrastive alignment. This positions SEAM as a lightweight, theoretically grounded complement to modern zero-shot and cross-lingual TTS frameworks, specifically targeting the sample-efficiency bottleneck in vocabulary expansion.

\end{document}